\newtheorem{EO}{Experimental Observation}
\newtheorem{matrice}{Confusion Matrix}
\newtheorem{theorem}{Theorem}
\newtheorem{corollary}{Corollary}
\theoremstyle{definition} 
\newtheorem{example}{Example}
\newcommand{\PCM}{\mbox{{\rm p}-$\mathcal{C}$\hspace{-0.45mm}$\mathscr{o}$\hspace{-0.1mm}$\mathscr{n}$\hspace{-0.1mm}$\mathscr{v}$}}
\journal{\raisebox{-1mm}{\hspace{-4.6mm}\color{white}{\huge$\blacksquare$}}}
\begin{document}

\begin{frontmatter}



\title{Prime Convolutional Model: Breaking the Ground for Theoretical Explainability\tnoteref{t1}}
\tnotetext[t1]{This research project has been supported by a Marie Sk\l odowska-Curie Innovative Training Network Fellowship of the European Commission’s Horizon 2020 Programme under contract number 955901 CISC.}

\author[1]{Francesco Panelli}
\ead{francesco.panelli@mathema.com}
\author[2]{Doaa Almhaithawi\corref{cor1}}
\ead{doaa.almhaithawi@polito.it}
\author[2]{Tania Cerquitelli}
\ead{tania.cerquitelli@polito.it}
\author[3]{Alessandro Bellini}
\ead{abel@mathema.com}

\cortext[cor1]{Corresponding author}

\affiliation[1]{organization={Independent Researcher},
city={Firenze},
country={Italy}
}
\affiliation[2]{organization={Politecnico di Torino, Department of Control and Computer Engineering},
city={Torino},
country={Italy}}
\affiliation[3]{organization={Mathema srl},
city={Firenze},
country={Italy}}

\begin{abstract}
In this paper, we propose a new theoretical approach to Explainable AI. Following the Scientific Method, this approach consists in formulating on the basis of empirical evidence, a mathematical model to explain and predict the behaviors of Neural Networks. We apply the method to a case study created in a controlled environment, which we call Prime Convolutional Model (\PCM\ for short). \PCM\ operates on a dataset consisting of the first one million natural numbers and is trained to identify the congruence classes modulo a given integer $m$. Its architecture uses a convolutional-type neural network that contextually processes a sequence of $B$ consecutive numbers to each input. We take an empirical approach and exploit \PCM\ to identify the congruence classes of numbers in a validation set using different values for $m$ and $B$. The results show that the different behaviors of \PCM\ (i.e., whether it can perform the task or not) can be modeled mathematically in terms of $m$ and $B$. The inferred mathematical model reveals interesting patterns able to explain when and why \PCM\ succeeds in performing task and, if not, which error pattern it follows.
\end{abstract}



\begin{keyword}
Explainable AI, Convolutional Neural Networks (CNN), Prime Grid, Natural Numbers.


\end{keyword}

\end{frontmatter}


\section{Introduction}\label{Introduction}
In the last years, Neural Networks have proven to have an extraordinary ability to address and solve a wide range of problems coming from different fields of human experience and knowledge; the spectrum of their applicability is extremely broad, ranging from data processing and generation (e.g., images \citep{dong2015image}, \citep{li2022comprehensive}, \citep{ning2020multi}, \citep{Shen2020InterFaceGANIT}; texts \citep{de2021survey}; sounds \citep{briot2020deep} or even multimodality \citep{baltruvsaitis2018multimodal}), image segmentation (cf. \citep{chen2017deeplab}, \citep{minaee2021image}), pattern recognition and detection (cf. \citep{li2022comprehensive}), medical diagnosis (cf. \citep{bakator2018deep}, \citep{soffer2019convolutional}), decision support (cf. \citep{mumali2022artificial}), robotics (cf. \citep{chen2022neural}, \citep{chen2022recurrent}), and much more. The success of Neural Networks has been striking, and their ability to produce new and unexpected results continues to surprise.

Among other recently developed applications, it turns out to be particularly intriguing the possibility of using Artificial Intelligence techniques in addressing mathematical problems. Many recent works have revealed the power of Neural Networks in supporting the study of pure Mathematics either by providing new tools for calculus (cf. \citep{anitescu2019artificial}, \citep{Lample2020Deep}) or also by guiding human intuition in formulating new reliable conjectures that can subsequently be proved (cf. \citep{davies2021advancing}).

On the other hand, it has been observed that mathematical methods can be applied to deepen our knowledge of the behavior of Neural Networks and to process or generate data more efficiently. Various research activities have shown that latent spaces of different types of networks (e.g., GANs \citep{aggarwal2021generative} and VAEs \citep{asperti2021survey}) can be successfully investigated with Riemannian Geometry techniques (cf. \citep{benfenati2023singular1}, \citep{benfenati2023singular2}, \citep{chen2018metrics}, \citep{hauser2017principles}). Besides, many authors have pointed out the existence of relationships between semantic features of the data (e.g., biological attributes in facial images) and geometric properties of the points representing these data in latent spaces (cf. \citep{fetty2020latent}, \citep{giardina2022naive}, \citep{Shen2020InterFaceGANIT}); such relationships can be captured by appropriate Riemannian metrics, which can be approximated, for example, by applying techniques of non-linear manifold statistics (cf. \citep{arvanitidis2017latent}).

The main problem in performing this type of analyses is that they aim to relate objectively determined characteristics to semantic features of the data, most of which are subjective. Moreover, in many applications, the data have several different attributes that may be interwoven in different and unexpected ways (as is the case, for example, in image generation when it comes to biological characteristics of faces such as the shape of the nose, the color of the hair, cf. \citep{giardina2022naive}, \citep{Shen2020InterFaceGANIT}). All these facts make the analysis of the behavior of the network rather complex, and therefore it proves challenging to hypothesize precisely what factors are associated with the correct functioning of the network and why.

In this paper, we propose a new approach to investigate the behaviors of Neural Networks. Following the Scientific Method, this approach aims to provide, on the basis of empirical evidence, a mathematical model that explains the internal functioning of Neural Networks and highlights the main relationships between inputs and outputs in a clear and precise way.

We shall apply our approach to a well-definable and modelable case study built within a controlled environment. To overcome the problem of relating subjective data characteristics to objectively determined mathematical quantities, we propose to use a dataset that is exclusively endowed with objective characteristics, namely a mathematical dataset. The main advantage is that the features and the relationships between them can be controlled externally by strict mathematical rules, which allow the formulation of precise hypotheses to explain the behaviors of the network in rigorous mathematical terms.

For simplicity, our choice regarding the dataset then falls on a finite set of natural numbers (i.e., the first one million numbers (from $0$ to $999\,999$)). The task that we choose for our experiments is \textit{identifying the congruence classes modulo a given integer $m$}.

To accomplish this task, we develop a convolutional network model that depends on several hyperparameters. The model's performances as a function of different values of the hyperparameters have revealed interesting behaviors of the networks, which we explain in terms of the arithmetic properties of numbers.

The model implements two complementary aspects of the algebraic nature of $\mathbb{N}$: the multiplicative and the additive structures.

The multiplicative structure is encoded in the representation of the input data, i.e., in the method by which we convert numbers into vectors to make them accessible to the network. This method is called {\em prime grid vector representation}, and its strength lies, as the name suggests, in the use of prime factorization of integers to represent numbers as vectors.

The additive structure is encoded in the network architecture. Specifically, the network is a standard convolutional architecture that processes each number $n$ in the input set in a context-dependent manner as a sequence of $B$ consecutive numbers starting at $n$ rather than as a single entity. This way of processing the data is the crucial point that allows the network to obtain, at least locally, information about the additive structure of $\mathbb{N}$. A measure of the extent of this locality is the length $B$ of the sequence of numbers processed with each input, and as might be expected, it plays a fundamental role in explaining the various behaviors of the network.

These facts have led us to name our model {\em Prime Convolutional Model}, \PCM\ for short.

The extensive experimental results reveal the existence of precise relationships between input and output and accurately explain the behaviors of the network (i.e., it is easy to understand when \PCM\ can identify the congruence classes modulo $m$ and, if not, why). Based on the empirical observations, we draw the theoretical consequences for describing and explaining the behaviors of \PCM\ through a mathematical formulation that highlights interesting patterns able to explain when and why \PCM\ succeeds in performing task and, if not, what error pattern it follows. Generalizing the results for \PCM\ to other types of neural models, with other types of datasets and networks, would pave the way for a new, theoretical concept of \textquotedblleft explainability\textquotedblright\ in artificial intelligence.


The paper is organized as follows. In Section~\ref{Literature_Review} we give a review of the main literature related to our work. In Section~\ref{Methodology}, we describe our methodology and present the Prime Convolutional Model in detail: the representation of the input data, the architecture, the task we use for training, and the main evaluation measures we use to analyze the results. In  Section~\ref{Experimental_Results} we discuss the experimental results. In Section~\ref{Theoretical_consequences}, we show how the empirical observations can be used to deduce some theoretical consequences on the behavior of \PCM. Finally, in Section~\ref{Conclusion} we draw some conclusions about our work and show how it may hopefully open up new possibilities for the future.

\section{Literature Review}\label{Literature_Review}
In this Section, we discuss the current state of research on neural networks and explainable artificial intelligence in terms of theoretical perspective and existing examples of collaboration between mathematics and machine learning. All the fields discussed below are and continue to be very active; the various topics covered arouse the interest of the scientific community due to their applicability to many concrete problems of daily life.

\emph{Neural Networks}. The contributions of Artificial Neural Networks in real-world applications are remarkable, and the variety of these applications has no limits \citep{abiodun2018state}. Convolutional Neural Networks (CNNs) are still considered the leading architectures along with Generative Adversarial Networks (GANs) and Variational Autoencoders (VAEs). While the latter two focus on generation and feature extraction, the former are mainly used for pattern recognition (cf. \citep{ning2020multi}). Although CNNs were first introduced to address image-driven pattern recognition tasks (cf. \citep{o2015introduction}), their applications also include computer vision (cf. Segmentation \citep{chen2017deeplab} and \citep{minaee2021image}, resolution augmentation \citep{dong2015image}), natural language processing (cf. \citep{li2021survey}), biomedical diagnosis, anomaly and fault detection (cf. \citep{kiranyaz20211d}), but are not limited to these.

\emph{Explainable Artificial Intelligence (XAI)}. Despite the success of AI models, it is still difficult to trust their results because it is not easy to understand how they are obtained. In fact, these models are often compared to \textquotedblleft black boxes\textquotedblright, whose interpretability can be very challenging, for example when trying to explain a wrong predictive result (cf. \citep{goebel2018explainable}). Several techniques have been developed to provide transparency and interpretability to the choices made by neural networks (cf. \citep{dovsilovic2018explainable}). These techniques mainly follow an \textquotedblleft a posteriori\textquotedblright\ approach, which extracts information from already learned models without precisely understanding the inner decision mechanisms. In \citep{angelov2020towards}, a prototype-based approach for explainable deep neural networks is presented; in \citep{rieger2019aggregating}, an aggregated explainability approach is presented and evaluated; in \citep{saadallah2021explainable}, an explainability approach is presented to support the selection of online CNN-based models using saliency maps for time series forecasting; \citep{samek2016evaluating} explores and evaluates heat maps for assessing neural network performances. Other works seek to replace \textquotedblleft black box\textquotedblright\ deep neural networks with a process that first aims to learn a set of interpretable concepts and then uses these concepts to perform a classification task (cf. \citep{koh2020concept} and its improved version \citep{espinosa2022concept}). Some recent attempts try to look at explainability from a more theoretical point of view. For example, \citep{jia2021studying} examines the relationship between the accuracy of the model and the quality of the explainability; \citep{debbi2021causal} uses techniques from the theory of causality on Convolutional Neural Networks to measure the relevance of the local features of an input image in the network's decision process; \citep{wang2023generalized} implements a new class of explanations, the so-called \textquotedblleft deliberative explanations\textquotedblright, which visualize the regions of an image that the network considers ambiguous for the proposed classification task; \citep{slack2021reliable} presents a Bayesian framework for generating local explanations along with the associated uncertainty; \citep{luo2022learning} develops a method for dynamically learning differential relations from input data to explain the time-evolving dynamics of Time Series Models.

\emph{AI techniques supporting abstract Mathematics}. The use of neural networks to support Mathematics is manifold. Several techniques have been developed to approximate the solutions of differential equations. For example, in \citep{anitescu2019artificial}, second order boundary value problems are approached with an adaptive collocation strategy; in \citep{raissi2019physics}, neural networks are trained to solve a supervised learning task while respecting a given physical law expressed by a non-linear PDE. Another active field of research concerns the possibility of employing artificial intelligence to deal with complex symbolic expressions. In \citep{allamanis2017learning}, a new architecture is proposed to learn the semantic representation of a symbolic expression; in \citep{Lample2020Deep} it is shown how neural networks can successfully deal with challenging tasks involving symbolic calculus like the integration of functions or the solution of ordinary differential equations. Learning algorithms is another problem that has been widely investigated, from the introduction of the so-called \textquotedblleft Neural Turing Machines\textquotedblright\ in \citep{graves2014neural} to the development of other architectures such as the Neural GPUs in \citep{kaiser2015neural} and the Neural ALUs in \citep{trask2018neural} that learn algorithms to perform elementary arithmetic or logic operations. Neural Networks have also been successfully used to support automatic theorem provers as in \citep{loos2017deep} and, more recently, in \citep{davies2021advancing}, to guide intuition in formulating new reliable conjectures to be subsequently proved by humans.

\emph{Mathematics supporting Neural Networks}. In recent years, many attempts have been made to build a mathematical theory of Neural Networks within a precise mathematical framework. Most of these attempts try to describe neural networks using techniques from Riemannian (cf. \citep{hauser2017principles}) and Pseudo-Riemannian Geometry (cf. \citep{benfenati2023singular1}, several applications of which are presented in \citep{benfenati2023singular2}), the goal being always that of finding, within latent spaces, the best metric that fits the data (i.e. a metric with respect to which semantically related data are represented by close points). Several results have been obtained in this direction. For example, for Deep Generative Models, in \citep{arvanitidis2017latent} it is shown how the non-linearity of latent spaces can be characterized by a stochastic metric; in \citep{kuhnel2018latent} the latent space metric, which is approximated by a different neural network, is used to implement various non-linear manifold statistics techniques; \citep{chen2018metrics} and \citep{shao2018riemannian} develop algorithms to implement geometric objects such as geodesics and parallel translations, and show how these concepts can be used to highlight semantic features of the data or in concrete applications such as robot movements. 
The semisupervised distance metric learning problem has also been successfully addressed in \citep{ying2017manifold} using Differential Geometry techniques. 
The geometry of latent spaces has been extensively investigated for several types of Generative Adversarial Networks, especially in the field of image generation. The problem consists in finding ways to embed a real image into the latent space, and then in moving towards close points in order to modify it (cf. \citep{abdal2019image2stylegan}); the role of geometry then comes into play to gain control over the modifications (cf. \citep{fetty2020latent}). The interplays between geometry and the features of the data are studied in \citep{giardina2022naive} and \citep{Shen2020InterFaceGANIT} for different GAN architectures (namely StyleGAN2 and InterFaceGAN respectively).

In this paper, we propose a new method by which Mathematics can support the study of Neural Networks. Our approach, contextualized in the framework of Explainable AI, consists in developing, on the basis of empirical evidence a mathematical model to predict and explain the behaviors of the networks. We shall apply the method to a specific case study created in a controlled environment, which we call {\em Prime Convolutional Model}, \PCM\ for short. \PCM\ works on a dataset consisting of the first one million natural numbers (from $0$ to $999\,999$), and is trained to identify the congruence classes modulo a given integer $m$.

These types of dataset and task were first considered in \citep{almhaithawi2023construction}, where the performances of two different network architectures and several input data vector representations were compared. The conclusion was that, for $m\in\{2,3,\dots,10\}$, the model can solve the proposed task precisely when an architecture of convolutional type and the so-called \textquotedblleft Prime Grid vector representation\textquotedblright\ are used.
The scope of this work is indeed completely different from that of \citep{almhaithawi2023construction}; it deals with a different research goal and paves the way for theoretically explainable AI. The only common aspects are the task, i.e., identifying the congruence classes modulo a given integer $m$ in a dataset consisting of the first one million natural numbers, and the model architecture.

\section{Methodology}\label{Methodology}
Our new theoretical approach to explainable AI follows the Scientific Method. First, we build, within a controlled environment, a case study consisting of a suitably chosen neural network model. Then, based on the results obtained by performing several experiments, we formulate some general rules about the model's behaviors in precise mathematical terms. Finally, we elaborate these rules mathematically to infer further behaviors, which are then verified empirically.

In this Section we present the neural network model that we use to perform our analysis. This model will be applied to a specific mathematical problem, namely \textit{identifying the congruence classes modulo $a$ given integer $m$} (cf. Subsection \ref{Task}), in a dataset consisting of the first one million natural numbers (from $0$ to $999\,999$). The architecture, which is of convolutional type, depends on several hyperparameters (cf. Subsection \ref{Hyperparameters}); the performances of the model as a function of these hyperparameters reveal interesting behaviors, which we explain in terms of the arithmetic properties of the numbers. More specifically, the model captures two complementary algebraic properties of the dataset: the multiplicative and the additive structures. These goals are achieved, respectively, by the representation of the input data (cf. Subsection \ref{IDR}), which exploits the prime factorization of the integers, and by the convolutional architecture of the involved network (cf. Subsection \ref{Arch}). For this reason, we call our model \textquotedblleft Prime Convolutional Model\textquotedblright, \PCM\ for short.

We provide an implementation of \PCM\ in a notebook available on the Code Ocean website.

\subsection{Input Data Representation}\label{IDR}
The vector representation that we choose for the input data is the so-called {\em prime grid vector representation}; it consists of an implementation of the arithmetic concept of \textquotedblleft Prime Grid\textquotedblright, first introduced in \citep{kolossvary2022distance}. This concept is closely related to the prime factor decomposition of natural numbers, and this explains the ability of its implemented version, the prime grid vector representation, to straightforwardly encode the multiplicative structure of $\mathbb{N}$.

If $\mathcal{P}=\{p_i\,|\,i=1,2,\dots\}$ is the sequence of all prime numbers in ascending order, the {\em Prime Grid} is the set $\mathbb{N}^{\mathcal{P}}$ of all the infinite sequences of natural numbers indexed on  $\mathcal{P}$. An element $n\in\mathbb{N}$ can then be represented in $\mathbb{N}^\mathcal{P}$ through its {\em prime signature}, that is, through the unique sequence $(\ell_1,\ell_2,\ell_3\dots) \in \mathbb{N}^\mathcal{P}$ such that
\begin{equation}\label{pfd}
n=\prod_{i=1}^\infty p_i^{\ell_i}.
\end{equation}
Since in (\ref{pfd}) only finitely many exponents are $\neq 0$, all the prime signatures of natural numbers are necessarily eventually zero (that is, all of their entries are zero from some point on);
this observation readily leads to a manageable implementation of the Prime Grid by truncation. For a dataset consisting of the first one million natural numbers, it is clear that the optimal level $N$ for truncating the prime signature is equal to the number of primes $<1\,000\,000$, that is $N=78\,498$; this avoids any loss of information since all ignored entries are $0$. However, in order to reduce the computational complexity of the implementation, we shall perform the truncation at level $N=5\,000$, so that the dataset will be restricted to those numbers
that can be factorized using only the first 5~000 primes. This reduction turns out to be not too invasive: in fact, it preserves 785~095 numbers, corresponding to about 79\% of the original dataset.

\begin{example}
\rm The prime signature of $20=2^2\cdot 5^1$ is the {\em infinite} sequence $(2,0,1,0,0,\dots)$; the prime grid vector representation of $20$ is therefore the $5000$-vector $(2,0,1,0,0,\dots,0)$. Similarly, the prime signature of $126=2^1\cdot 3^2\cdot 7^1$ is $(1,2,0,1,0,0,\dots)$, while its prime grid vector representation is the $5000$-vector $(1,2,0,1,0,0,\dots,0)$.
\end{example}

\subsection{Architecture}\label{Arch}
The architecture of the model is a standard convolutional architecture (cf. \citep{o2015introduction}, \citep{li2021survey}, \citep{ajit2020review}). The strength of this approach consists in processing the data in sequences; this is realized by organizing each element $n$ of the dataset into a $B\times N$ matrix whose rows contain the prime grid vector representations of a sequence of $B$ consecutive numbers starting at $n$. 

\begin{figure*}[!t]
\centering
\includegraphics[width=\textwidth]{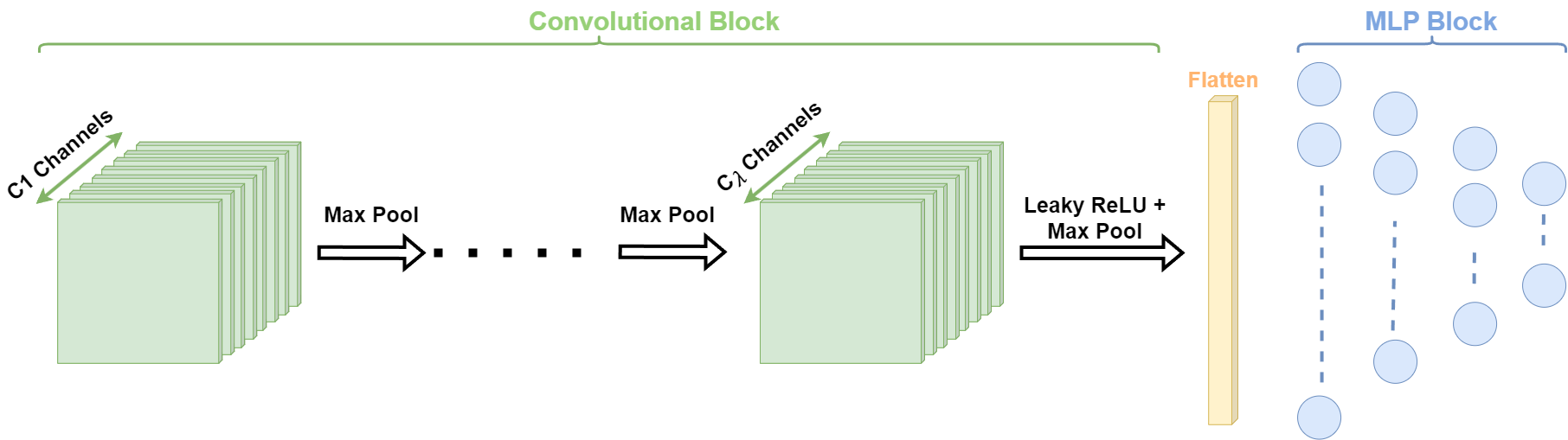}
\caption{Standard CNN architecture.}
\label{fig_OriginalArc}
\end{figure*}

The model consists of a convolutional block and multilayer perceptron block (cf. Figure \ref{fig_OriginalArc}). The former contains $\lambda$ convolutional layers, each followed by max-pooling with a stride of two; for $i=1,\dots,\lambda$ layer $i$ has $C_i$ channels, each acted on by a kernel of size $k_i\times k_i$. Layer $\lambda$ is also followed by an application of the Leaky ReLU function. A flattened layer then reorganizes the data into one-dimensional arrays to be processed by the multilayer perceptron block, which consists of four fully connected linear layers, each followed (except for the last) by the Leaky ReLU function. The dimensions of the fully connected linear layers follow the scheme:
\begin{equation}\label{scheme}
F\rightarrow 1000 \rightarrow 100 \rightarrow 10\rightarrow m,
\end{equation}
where $F$ is the dimension of the flatten layer and $m$ is the dimension of the output space.

\subsection{The Task}\label{Task}
Our \PCM\ is trained to identify the congruence classes modulo $a$ given natural number $m\geq 2$. Recall from elementary arithmetic that two integers $z_1$, $z_2$ are said to be congruent modulo $m$, in symbols 
$$
z_1\equiv z_2\mod m,
$$
if $m$ divides $z_1-z_2$ or, equivalently, if $z_1$ and $z_2$ produce the same remainder in the division by $m$. Given $z\in\mathbb{Z}$, the congruence class of $z$ modulo $m$ is the set of all integers that are congruent to $z$ modulo $m$, and it can be explicitly described as the set
$$
[z]_m=\{z+\xi m\,|\,\xi\in\mathbb{Z}\}.
$$
There exist precisely $m$ distinct congruence classes modulo $m$, namely $[0]_m,[1]_m,\dots$, $[m-1]_m$; we therefore train the model so to assign to each number $n$ in the input space the correct label $L\in\{0,1,\dots,m-1\}$ that corresponds to the congruence class of $n$ modulo $m$.

\begin{example}
There are precisely seven congruence classes modulo $m=7$, namely:
$$
\begin{array}{l}
[0]_7=\{\dots, -21, -14, -7,0, 7, 14, 21,\dots\};    \\  {[1]_7=\{\dots,-20, -13, -6, 1, 15, 22,\dots\};}\\
{[2]_7=\{\dots,-19, -12, -5, 2, 16, 23,\dots\};}   \\ {[3]_7=\{\dots,-18, -11, -4, 3, 17, 24,\dots\};}\\
{[4]_7=\{\dots,-17, -10, -3, 4, 18, 25,\dots\};} \\ {[5]_7=\{\dots,-16, -9, -2, 5, 19, 26,\dots\};}\\
{[6]_7=\{\dots,-15, -8, -1, 6, 20, 27,\dots\}.}
\end{array}
$$
\end{example}

The training occurs in batches: we use $r$ disjoint batches, each consisting of $s$ distinct numbers randomly chosen from the input set. In this way the size of the training set is $r\cdot s$ numbers. 

\begin{example}
As we shall see in \ref{App_Training_Hyp}, in our applications we use $r=400$ batches, each containing $s=256$ distinct numbers; therefore our training set consists of $r\cdot s=400\cdot 256= 102\,400$ numbers. 
\end{example}

The optimization step is carried out by the Adam optimizer, and the function to be optimized is the Cross-Entropy loss function (cf. \citep{NEURIPS2019_9015} for the implementations). The training lasts $t$ epochs, during each of which the model is applied to all batches in sequence: after application to the $i$-th batch, the weights are updated accordingly, and then the new model is applied to the $(i+1)$-th batch.

\subsection{Evaluation Measures}
After training, the model is validated on a batch of $512$ distinct numbers randomly chosen outside of the training set.
In order to interpret the results of the validation and analyze the corresponding performance of the model, we rely on two classical evaluation measures: accuracy, that is, the ratio between the number of labels that the model has correctly assigned and the size of the validation set; and confusion matrix, that is, the $m\times m$ matrix $C=(c_{ij})$ whose \mbox{$(i,j)$-th} entry $c_{ij}$ denotes the number of elements in the validation set that have label $i$ and to which the model has assigned label~$j$. 

\subsection{Impact of Hyperparameters on \PCM\ development}\label{Hyperparameters}
As shown in the previous Subsections, \PCM\ depends on several hyperparameters; the choice of the optimal values for each of them requires a rather technical and tedious analysis that we shall present in the Appendix so as not to interrupt the discussion at this level. In this Subsection we shall then only categorize these hyperparameters into three distinct classes depending on their role in the architecture of \PCM, and present the upshots of the tuning analysis listing, for each hyperparameter, the values that we choose to perform the experiments.

The classes of hyperparameters that we distinguish are the following:

\begin{itemize}
    \item {\bf Convolutional hyperparameters}. These parameters are involved in the convolutional block of the architecture of \PCM; they are the number $\lambda$ of convolutional layers and the numbers $C_1,\dots,C_\lambda$ of channels for each layer. As we shall see in \ref{App_Conv_Hyp}, these hyperparameters only affect the computational complexity of the model without modifying too much its behaviors; for this reason the values chosen for them are as small as possible: $\lambda=1$, and $C_1=C=4$.
    \item {\bf Training hyperparameters}. These parameters are involved in the training process of the network; they are the number $r$ of batches, the size $s$ of every batch, and the number $t$ of epochs. As we shall see in \ref{App_Training_Hyp}, these hyperparameters affect mainly the model building time. We shall use for them relatively low values at which the typical behaviors of \PCM\ are already visible: $r=400$, $s=256$ and $t=10$ (however we do not present systematically the results of the 10th epoch, but rather those of the epoch at which the best accuracy is attained).
    \item {\bf Locality hyperparameters}. These parameters are closely related to the network's local understanding of the additive structure in $\mathbb{N}$; they are the length $B$ of the sequence of numbers contextually processed with each input and the sizes $k_1,\dots,k_\lambda$ of the kernels acting on the channels. The experiments that we shall present in Section \ref{Experimental_Results} reveal how different performances and behaviors of the model in correspondence of different values of $B$ can shed light on the inner mechanisms that govern the functioning of our network. For this reason, we shall not pick a unique value for the locality hyperparameters, but rather a set of significant values that illustrate the totality of behaviors that can occur. For reasons to be discussed in depth in \ref{App_Loc_Hyp}, we take $B\in\{8,16,24\}$. Furthermore, if $B=8$ the size $k_1=k$ of the kernel is set to $k=7$; if $B=16$ or $24$ we compare the outcomes of \PCM\ with $k=7$ and $k=15$.
\end{itemize}

The final architecture of \PCM\ is represented in Figure \ref{PCM_final_arch}.

\begin{figure*}[!t]
\centering
\includegraphics[width=0.7\textwidth]{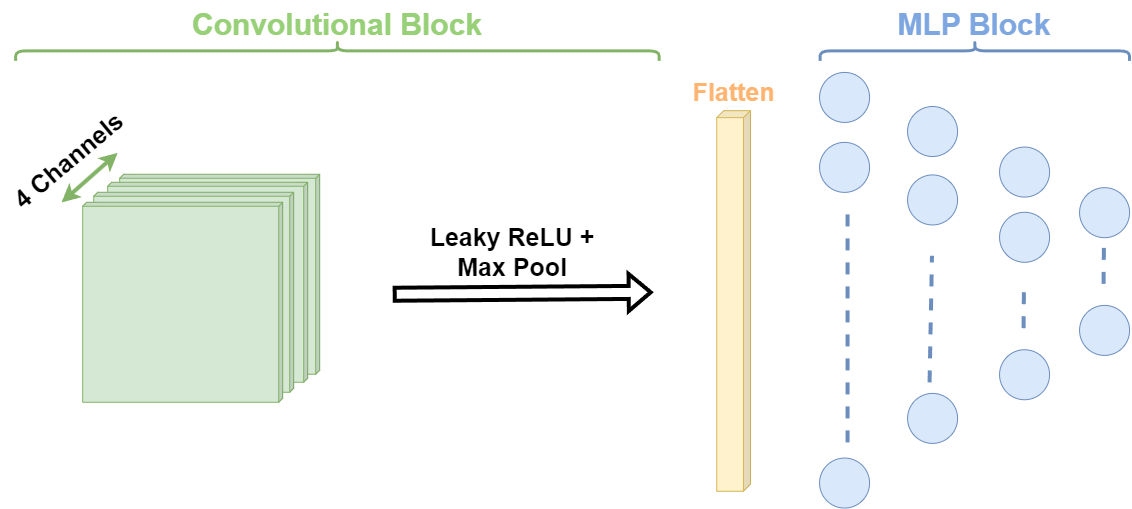}
\caption{\PCM\ final architecture.}
\label{PCM_final_arch}
\end{figure*}

\section{Theoretical Observations and Experimental Results}\label{Experimental_Results}
In order to study \PCM, we conduct several experiments to classify the congruence classes modulo an integer $m\in\{2,3,\dots,30\}$. A careful analysis of the results shows that \PCM\ follows general behavioral schemes; in this section we present these schemes through clear statements, that we call \textquotedblleft Experimental Observations\textquotedblright, and provide empirical support for them.

\begin{EO}\label{recognized_classes}
\PCM\ always identifies class $[0]_m$ and the last $B$ classes.
\end{EO}

It is not really surprising that the network may recognize class $[0]_m$: indeed, this information is encoded within the Prime Grid vector representation. It is, however, actually unexpected that the network may recognize classes $[-1]_m$, $[-2]_m,\dots$ $[-B]_m$ for any choice of $m$ and $B$. 

Note that Experimental Observation \ref{recognized_classes}, in particular, implies that the task of identifying the congruence classes modulo $m$ is fully solved by \PCM\ if
\begin{equation}\label{mB2}
    m\leq B+2.
\end{equation}
When \eqref{mB2} is not satisfied, then the model does not necessarily fail in identifying all the congruence classes. All the performed experiments reveal in fact the existence of a close relationship between the solvability of the task of identification of the congruence classes modulo $m$ and the prime factor decomposition of $m$:

\begin{EO}\label{PCM_functioning}
Let $m\geq 2$ be an integer, and let $m=p_1^{\ell_1}p_2^{\ell_2}\cdots p_\alpha^{\ell_\alpha}$ be the prime factor decomposition of $m$. \PCM\ identifies the congruence classes modulo $m$ if and only if it identifies the congruence classes modulo $p_i^{\ell_i}$ for all $i=1,\dots,\alpha$.
\end{EO}

In order to provide empirical evidence to Experimental Observations \ref{recognized_classes} and \ref{PCM_functioning} we proceed in three steps, distinguishing the cases where the modulo, $m$, is a {\em prime}, a {\em power of a prime}, and finally a {\em splitting number} (i.e. its prime factor decomposition involves at least two distinct primes). 

\subsection{Prime Moduli}\label{PatternA}
When $m$ is a prime number $p$, all the experiments lead to the following:

\begin{EO}\label{EO_prime_moduli}
If $m=p$ is a prime number, then \PCM\ identifies the congruence classes modulo $p$ if and only if $p\leq B+2$. If $p>B+2$ then the classes $[0]_p$, $[-1]_p,\dots$, $[-B]_p$ are correctly identified, while the classes $[1]_p,\dots,$ \mbox{$[m-B-1]_p$} (which are at least $2$) are randomly mixed and confused with one another by \PCM.   
\end{EO}

We remark here that not only the latter statement perfectly agrees with Experimental Observations \ref{recognized_classes} and \ref{PCM_functioning} in the particular case of a prime modulo, but also it provides a clear description of the error pattern followed by the model in all of those cases that it cannot treat correctly. 

In order to verify Experimental Observation \ref{EO_prime_moduli} we exploit \PCM\ to identify the congruence classes modulo all prime numbers $m$ in the range 2--30, that is, $m\in\{2,3,5,7,11,13,17,19,23,29\}$, with $B\in\{8,16,24\}$ and $k\in\{7,15\}$. The accuracies attained are listed in Table \ref{table_pattern_A}.

\begin{table}
\begin{center}
\caption{Accuracies attained by \PCM\ when $m\in\{2,3,\dots,30\}$ is prime, $B\in\{8,16,24\}$ and $k\in\{7,15\}$.}
\label{table_pattern_A}
\begin{tabular}{c|c|cc|cc|} 
\hhline{~-----}
\multicolumn{1}{c|}{} & \multicolumn{1}{c|}{\cellcolor{gray!30}$\boldsymbol{B=8}$} &  \multicolumn{2}{c|}{\cellcolor{gray!30}$\boldsymbol{B=16}$} & \multicolumn{2}{c|}{\cellcolor{gray!30}$\boldsymbol{B=24}$}  \\
\hhline{------}
\multicolumn{1}{|c|}{\cellcolor{gray!30}$\boldsymbol{m}$} & $\cellcolor{gray!30}\boldsymbol{k=7}$ & $\cellcolor{gray!30}\boldsymbol{k=7}$ & $\cellcolor{gray!30}\boldsymbol{k=15}$ & $\cellcolor{gray!30}\boldsymbol{k=7}$ & $\cellcolor{gray!30}\boldsymbol{k=15}$\\
\hline
\multicolumn{1}{|c|}{\cellcolor{gray!30}\bf 2} & 1.00 & 1.00 & 1.00 & 1.00 & 1.00\\
\hline
\multicolumn{1}{|c|}{\cellcolor{gray!30}\bf 3} & 1.00 & 1.00 & 1.00 & 1.00 & 1.00\\
\hline
\multicolumn{1}{|c|}{\cellcolor{gray!30}\bf 5} & 1.00 & 1.00 & 1.00 & 1.00 & 1.00\\
\hline
\multicolumn{1}{|c|}{\cellcolor{gray!30}\bf 7} & 1.00 & 1.00 & 1.00 & 1.00 & 1.00\\
\hline
\multicolumn{1}{|c|}{\cellcolor{gray!30}\bf 11} & 0.92 & 1.00 & 1.00 & 1.00 & 1.00\\
\hline
\multicolumn{1}{|c|}{\cellcolor{gray!30}\bf 13} & 0.76 & 1.00 & 1.00 & 1.00 & 1.00\\
\hline
\multicolumn{1}{|c|}{\cellcolor{gray!30}\bf 17} & 0.59 & 1.00 & 1.00 & 1.00 & 1.00\\
\hline
\multicolumn{1}{|c|}{\cellcolor{gray!30}\bf 19} & 0.54 & 0.94 & 0.95 & 0.99 & 1.00\\
\hline
\multicolumn{1}{|c|}{\cellcolor{gray!30}\bf 23} & 0.38 & 0.70 & 0.77 & {\bf 0.92} & 1.00\\
\hline
\multicolumn{1}{|c|}{\cellcolor{gray!30}\bf 29} & 0.33 & 0.56 & 0.59 & 0.79 & 0.88\\
\hline
\end{tabular}
\end{center}
\end{table}

The results manifestly confirm what has been claimed: with a sequence of length $B=8$ accuracy 1.00 is attained up to $m=7$; with a sequence of length $B=16$ accuracy 1.00 is attained up to $m=17$; with a sequence of length $B=24$ and a kernel size of $k=15$ accuracy 1.00 is attained up to $m=23$. All those situations in which accuracy 1.00 is not reached were predicted by Experimental Observation \ref{EO_prime_moduli}, except that with $m=23$, $B=24$ and $k=7$ (boldfaced in Table \ref{table_pattern_A}). 

To gain a deeper insight into this case, we analyze the corresponding confusion matrix (cf. Confusion Matrix \ref{B24m23k7}). Its form is clear: it is an almost-diagonal matrix with only minor non-zero entries (highlighted in light red) outside of the main diagonal (and mostly close to it). This behavior is explainable by looking at the size of the kernel: if $k$ is too small with respect to $B$, then the model struggles to perform the task. This is not really surprising: if $k/B \ll 1$, the convolutional layer loses its effectiveness.

\begin{figure}[!t]
\centering
\includegraphics[width=7cm]{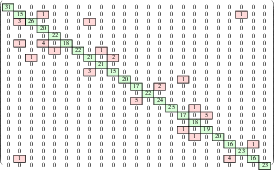}

\vspace{-3mm}
\begin{matrice}\label{B24m23k7}
\centering
$m=23$, $B=24$, $k=7$.
\end{matrice}
\end{figure}



Table \ref{table_pattern_A} shows other cases in which the model does not attain accuracy 1.00: all of these cases were predicted by Experimental Observation \ref{EO_prime_moduli}, however we can gain a better understanding of the error pattern that \PCM\ follows by looking at the confusion matrices. They reveal a common form consisting of three diagonal blocks, differently colored in the examples provided: the blue block corresponds to class $[0]_m$ being correctly identified; the yellow block corresponds to the last $B$ classes being correctly identified; the red block corresponds to the remaining classes being randomly confused with one another. For example, Confusion Matrix \ref{B8m11k7} refers to the case with $m=11$, $B=8$ and $k=7$ and Confusion Matrix \ref{B8m13k7} refers to the case with $m=13$, $B=8$ and $k=7$.

\begin{figure*}[!t]
\centering
\begin{minipage}{0.49\textwidth}
\centering
\includegraphics[width=6cm]{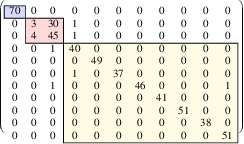}

\noindent\begin{minipage}{7cm}
\begin{matrice}\label{B8m11k7}
\centering
$m=11$, $B=8$, $k=7$.
\end{matrice}
\end{minipage}
\end{minipage}
\begin{minipage}{0.49\textwidth}
\centering
\includegraphics[width=6cm]{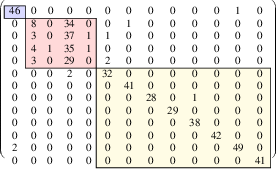}

\noindent\begin{minipage}{\textwidth}
\begin{matrice}\label{B8m13k7}
\centering
$m=13$, $B=8$, $k=7$.
\end{matrice}
\end{minipage}
\end{minipage}
\end{figure*}





It is also interesting to compare the confusion matrices corresponding to cases in which $m$ is fixed and $B$ varies: it is then visible how, in accordance with our general claim, as $B$ increases, the sizes of the yellow block increase as well at the expense of those of the red one (cf. for example, Confusion Matrices \ref{B8m19k7}, \ref{B16m19k15}, in which $m=19$, and Confusion Matrices \ref{B8m29k7}, \ref{B16m29k15}, \ref{B24m29k15} in which $m=29$). 

\begin{figure*}[!t]
\centering
\begin{minipage}{0.49\textwidth}
\centering
\includegraphics[width=5.8cm]{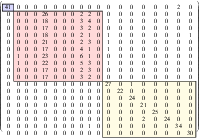}

\noindent\begin{minipage}{7cm}
\begin{matrice}\label{B8m19k7}
\centering
$m=19$, $B=8$, $k=7$.
\end{matrice}
\end{minipage}
\end{minipage}
\begin{minipage}{0.49\textwidth}
\centering
\includegraphics[width=6.6cm]{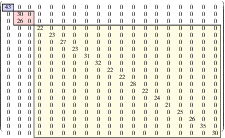}

\noindent\begin{minipage}{\textwidth}
\begin{matrice}\label{B16m19k15}
\centering
$m=19$, $B=16$, $k=15$.
\end{matrice}
\end{minipage}
\end{minipage}
\end{figure*}

\begin{figure}[!t]
\centering
\includegraphics[width=8cm]{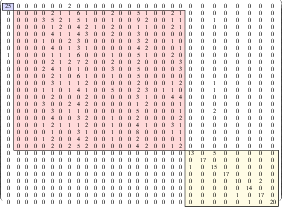}

\vspace{-3mm}
\begin{matrice}\label{B8m29k7}
\centering
$m=29$, $B=8$, $k=7$.
\end{matrice}
\end{figure}

\begin{figure}[!t]
\centering
\includegraphics[width=8.8cm]{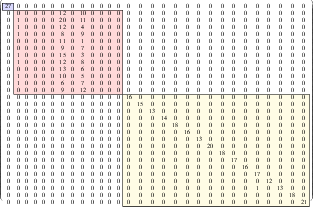}

\vspace{-3mm}
\begin{matrice}\label{B16m29k15}
\centering
$m=29$, $B=16$, $k=15$.
\end{matrice}
\end{figure}




\begin{figure*}[!t]
\centering
\includegraphics[width=9.2cm]{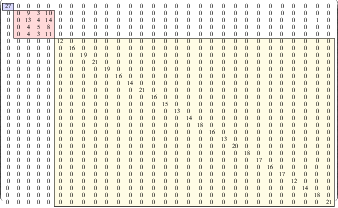}

\noindent\begin{minipage}{\textwidth}
\begin{matrice}\label{B24m29k15}
\centering
$m=29$, $B=24$, $k=15$.
\end{matrice}
\end{minipage}
\end{figure*}

\subsection{Prime Power Moduli}\label{PatternB}
We now consider the situation where $m=p^a$ is a power of some prime number $p$. All the performed experiments show that we can characterize recursively the moduli $m$ whose equivalence classes are correctly identified by \PCM. The case $a=1$ has been already considered in Subsection \ref{PatternA}, therefore we may assume $a\geq 2$.

\begin{EO}\label{EO_prime_power}
Let $m=p^a$ be a power of some prime number $p$, with $a\geq 2$. 
\begin{enumerate}
    \item If $p^a\leq B+2$, then \PCM\ correctly identifies all the congruence classes modulo $p^a$.
    \item If $p^a>B+2$, let $j\in\{0,1,\dots,a-1\}$ be the the largest integer such that the model can correctly identify the congruence classes modulo $p^j$. Then:
    \begin{itemize}
        \item \PCM\ correctly identifies the classes $[0]_{p^a}$, $[-1]_{p^a},\dots$, $[-B]_{p^a}$.
        \item The only difficulty that it exhibits in identifying the classes $[1]_{p^a},\dots$, $[p^a-B-1]_{p^a}$ consists in distinguishing those that belong to the same class modulo $p^j$. 
    \end{itemize}
    In particular, this means that, when $p^a> B+2$, \PCM\ can correctly identify the congruence classes modulo $p^a$ if and only if 
    \begin{equation}\label{identif_cond}
        p^a-B-1\leq p^j.
    \end{equation}
\end{enumerate}
\end{EO}

The latter formulation is in perfect agreement with Experimental Observations \ref{recognized_classes} and \ref{PCM_functioning}; furthermore it precisely describes the error pattern followed by the model in all of those cases in which it is not able to perform the task correctly.  
 
In order to verify Experimental Observation \ref{EO_prime_power}, we exploit \PCM\ to identify the congruence classes modulo all non-trivial prime powers $m$ in the range 2--30, that is, $m\in\{4,8,9,16,25,27\}$, with $B\in\{8,16,24\}$ and $k\in\{7,15\}$. The accuracies attained are listed in Table \ref{table_pattern_B}.

\begin{table}
\begin{center}
\caption{Accuracies attained by \PCM\ when $m\in\{2,3,\dots,30\}$ is a non-trivial power of a prime, $B\in\{8,16,24\}$ and $k\in\{7,15\}$.}
\label{table_pattern_B}
\begin{tabular}{c|c|cc|cc|} 
\hhline{~-----}
\multicolumn{1}{c|}{} & \multicolumn{1}{c|}{\cellcolor{gray!30}$\boldsymbol{B=8}$} &  \multicolumn{2}{c|}{\cellcolor{gray!30}$\boldsymbol{B=16}$} & \multicolumn{2}{c|}{\cellcolor{gray!30}$\boldsymbol{B=24}$}  \\
\hline
\multicolumn{1}{|c|}{\cellcolor{gray!30}$\boldsymbol{m}$} & $\cellcolor{gray!30}\boldsymbol{k=7}$ & $\cellcolor{gray!30}\boldsymbol{k=7}$ & $\cellcolor{gray!30}\boldsymbol{k=15}$ & $\cellcolor{gray!30}\boldsymbol{k=7}$ & $\cellcolor{gray!30}\boldsymbol{k=15}$\\
\hline
\multicolumn{1}{|c|}{\cellcolor{gray!30}\bf 4} & 1.00 & 1.00 & 1.00 & 1.00 & 1.00\\
\hline
\multicolumn{1}{|c|}{\cellcolor{gray!30}\bf 8} & 1.00 & 1.00 & 1.00 & 1.00 & 1.00\\
\hline
\multicolumn{1}{|c|}{\cellcolor{gray!30}\bf 9} & 1.00 & 1.00 & 1.00 & 1.00 & 1.00\\
\hline
\multicolumn{1}{|c|}{\cellcolor{gray!30}\bf 16} & 1.00 & 1.00 & 1.00 & 1.00 & 1.00\\
\hline
\multicolumn{1}{|c|}{\cellcolor{gray!30}\bf 25} & 0.60 & 0.89 & 0.85 & 1.00 & 0.99\\
\hline
\multicolumn{1}{|c|}{\cellcolor{gray!30}\bf 27} & 0.67 & 0.95 & 0.95 & 1.00 & 1.00\\
\hline
\end{tabular}
\end{center}
\end{table}

Once again the results confirm our predictions.
\begin{itemize}
    \item If $B=8$:
    \begin{itemize}
        \item The model correctly identifies the congruence classes modulo 4, 8 and 9.
        \item The model correctly identifies also the congruence classes modulo 16, for in this case $p=2$, $a=4$, $j=3$ and condition \eqref{identif_cond} is satisfied.
        \item The model does not identify the congruence classes modulo 25 and modulo 27: in the former case $p=5$, $a=2$, $j=1$, in the latter case $p=3$, $a=3$, $j=2$ and in neither of them condition \eqref{identif_cond} is satisfied.
    \end{itemize}
    \item If $B=16$:
    \begin{itemize}
        \item The model correctly identifies the congruence classes modulo 4, 8, 9 and 16.
        \item The model does not identify the congruence classes modulo 25 and modulo 27: in the former case $p=5$, $a=2$, $j=1$, in the latter case $p=3$, $a=3$, $j=2$ and in neither of them condition \eqref{identif_cond} is satisfied.
    \end{itemize}
    \item If $B=24$:
    \begin{itemize}
        \item The model correctly identifies the congruence classes modulo 4, 8, 9, 16 and 25.
        \item The model correctly identifies also the congruence classes modulo 27, for in this case $p=3$, $a=3$, $j=2$ and condition \eqref{identif_cond} is satisfied.
    \end{itemize}
\end{itemize}

In all of those situations in which accuracy 1.00 is not attained, the validity of the error pattern indicated in Experimental Observation \ref{EO_prime_power} can be checked by looking at the confusion matrices and noting that they share a common form. As in the examples provided in Subsection \ref{PatternA}, this common form consists of three diagonal blocks, differently colored in the examples below: the leftmost block, colored in blue, corresponds to class $[0]_m$ being correctly identified, while the rightmost block, colored in yellow, corresponds to the last $B$ classes being correctly identified. The substantial difference with respect to the prime moduli case lies in the central block, which now presents all of the non-zero entries on a peculiar diagonal pattern where the diagonal lines are $p^j$ places apart ($p$ and $j$ having here the same meaning as in the statement of Experimental Observation \ref{EO_prime_power}). 

Consider, for instance, Confusion Matrix \ref{B8m25k7}. In this case $m=25$, $B=8$ and $k=7$, therefore $p=5$, $a=2$, $j=1$ and in the central block we find all of the non-zero entries along diagonal lines (colored in light red and green) that are $p^j=5$ places apart from each other. More analogous examples are provided by Confusion Matrices  \ref{B16m25k7}, \ref{B8m27k7} and \ref{B16m27k7}.

\begin{figure*}[!t]
\centering
\includegraphics[width=8cm]{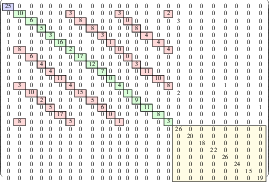}

\noindent\begin{minipage}{\textwidth}
\begin{matrice}\label{B8m25k7}
\centering
$m=25$, $B=8$, $k=7$.
\end{matrice}
\end{minipage}
\end{figure*}

\begin{figure*}[!t]
\centering
\includegraphics[width=8.7cm]{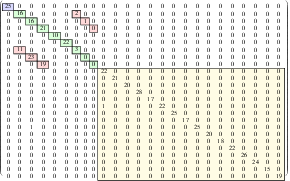}

\noindent\begin{minipage}{\textwidth}
\begin{matrice}\label{B16m25k7}
\centering
$m=25$, $B=16$, $k=15$.
\end{matrice}
\end{minipage}
\end{figure*}







\begin{figure*}[!t]
\centering
\includegraphics[width=8.4cm]{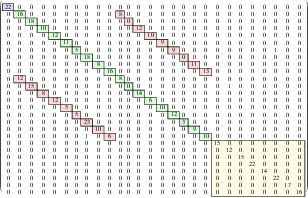}

\noindent\begin{minipage}{\textwidth}
\begin{matrice}\label{B8m27k7}
\centering
$m=27$, $B=8$, $k=7$.
\end{matrice}
\end{minipage}
\end{figure*}

\begin{figure*}[!t]
\centering
\includegraphics[width=8.7cm]{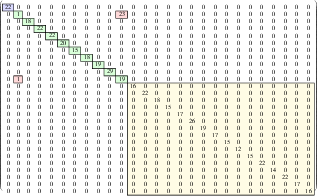}

\noindent\begin{minipage}{\textwidth}
\begin{matrice}\label{B16m27k7}
\centering
$m=27$, $B=16$, $k=15$.
\end{matrice}
\end{minipage}
\end{figure*}

\subsection{Splitting Moduli}\label{PatternC}
In this Subsection we present the final experiments supporting our general claims, and analyze the case of splitting moduli, that is, moduli $m$ whose prime factor decomposition involves at least two distinct primes. Table \ref{table_pattern_C} lists the accuracies attained by \PCM\ when $m\in\{6,10,12,14,15,18,20,21,22,24,26,28,30\}$, $B\in\{8,16,24\}$ and $k\in\{7,15\}$.

\begin{table}
\begin{center}
\caption{Accuracies attained by \PCM\ when $m\in\{2,3,\dots,30\}$ splits, $B\in\{8,16,24\}$ and $k\in\{7,15\}$.}
\label{table_pattern_C}
\begin{tabular}{c|c|cc|cc|} 
\hhline{~-----}
\multicolumn{1}{c|}{} & \multicolumn{1}{c|}{\cellcolor{gray!30}$\boldsymbol{B=8}$} &  \multicolumn{2}{c|}{\cellcolor{gray!30}$\boldsymbol{B=16}$} & \multicolumn{2}{c|}{\cellcolor{gray!30}$\boldsymbol{B=24}$}  \\
\hline
\multicolumn{1}{|c|}{\cellcolor{gray!30}$\boldsymbol{m}$} & $\cellcolor{gray!30}\boldsymbol{k=7}$ & \cellcolor{gray!30}$\boldsymbol{k=7}$ & \cellcolor{gray!30}$\boldsymbol{k=15}$ & \cellcolor{gray!30}$\boldsymbol{k=7}$ & \cellcolor{gray!30}$\boldsymbol{k=15}$\\
\hline
\multicolumn{1}{|c|}{\cellcolor{gray!30}\bf 6} & 1.00 & 1.00 & 1.00 & 1.00 & 1.00\\
\hline
\multicolumn{1}{|c|}{\cellcolor{gray!30}\bf 10} & 1.00 & 1.00 & 1.00 & 1.00 & 1.00\\
\hline
\multicolumn{1}{|c|}{\cellcolor{gray!30}\bf 12} & 1.00 & 1.00 & 1.00 & 1.00 & 1.00\\
\hline
\multicolumn{1}{|c|}{\cellcolor{gray!30}\bf 14} & 1.00 & 1.00 & 1.00 & 1.00 & 1.00\\
\hline
\multicolumn{1}{|c|}{\cellcolor{gray!30}\bf 15} & 1.00 & 1.00 & 1.00 & 1.00 & 1.00\\
\hline
\multicolumn{1}{|c|}{\cellcolor{gray!30}\bf 18} & 1.00 & 1.00 & 1.00 & 1.00 & 1.00\\
\hline
\multicolumn{1}{|c|}{\cellcolor{gray!30}\bf 20} & 1.00 & 1.00 & 1.00 & 1.00 & 1.00\\
\hline
\multicolumn{1}{|c|}{\cellcolor{gray!30}\bf 21} & 1.00 & 1.00 & 1.00 & 1.00 & 1.00\\
\hline
\multicolumn{1}{|c|}{\cellcolor{gray!30}\bf 22} & 0.91 & 0.99 & 0.99 & 1.00 & 1.00\\
\hline
\multicolumn{1}{|c|}{\cellcolor{gray!30}\bf 24} & 1.00 & 1.00 & 1.00 & 1.00 & 1.00\\
\hline
\multicolumn{1}{|c|}{\cellcolor{gray!30}\bf 26} & 0.76 & 1.00 & 1.00 & 0.99 & 1.00\\
\hline
\multicolumn{1}{|c|}{\cellcolor{gray!30}\bf 28} & 1.00 & 1.00 & 1.00 & 1.00 & 1.00\\
\hline
\multicolumn{1}{|c|}{\cellcolor{gray!30}\bf 30} & 1.00 & 1.00 & 1.00 & 1.00 & 1.00\\
\hline
\end{tabular}
\end{center}
\end{table}

It can be immediately seen from the results that the model can identify the congruence classes modulo some integer $m=p_1^{\ell_1}\cdots p_\alpha^{\ell_\alpha}$ if it can identify the congruence classes modulo each $p_i^{\ell_i}$. This is the case, for example, of modulo $m=30=2\cdot 3\cdot 5$: the model can positively deal with this modulo even using a relatively short sequence of length $B=8$ since this value of $B$ allows the identification of the congruence classes modulo each of the primes 2, 3 and 5. 

On the other hand, if the model cannot correctly identify  the congruence classes modulo some of the component prime powers, then it cannot identify either the congruence classes modulo the whole number. The confusion matrices, in this situation, mirror the error pattern of the unknown component as many times as indicated by the known component. This is the case of $m=22=2\cdot 11$ and $m=26=2\cdot 13$ when a sequence of length $B=8$ is employed. Indeed we see that Confusion Matrix \ref{B8m22k7} presents the same three-block structure of Confusion Matrix \ref{B8m11k7} replicated twice, while Confusion Matrix \ref{B8m26k7} presents the same three-block structure of Confusion Matrix \ref{B8m13k7} also replicated twice.

\begin{figure*}[!t]
\centering
\includegraphics[width=7.7cm]{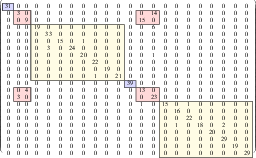}

\noindent\begin{minipage}{\textwidth}
\begin{matrice}\label{B8m22k7}
\centering
$m=22$, $B=8$, $k=7$.
\end{matrice}
\end{minipage}
\end{figure*}

\begin{figure*}[!t]
\centering
\includegraphics[width=8cm]{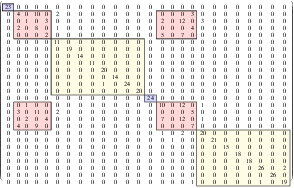}

\noindent\begin{minipage}{\textwidth}
\begin{matrice}\label{B8m26k7}
\centering
$m=26$, $B=8$, $k=7$.
\end{matrice}
\end{minipage}
\end{figure*}

\section{Towards Theoretical Explanation}\label{Theoretical_consequences}
The outcomes of the experiments presented in Section \ref{Experimental_Results} are surprising and very interesting. Not only they reveal precise relationships between the modulo $m$ whose equivalence classes we aim to classify and the locality hyperparameter $B$, relationships that permit to predict whether \PCM\ will fulfill the task, but also, when this is not the case, they show clear error patterns that highlight the features that the model cannot capture. 

Furthermore, the Experimental Observations drawn from the experiments' results can be elaborated mathematically to deduce new verifiable hypotheses on the behavior of \PCM. This is exemplified by the following elementary result that clarifies the content of Experimental Observation  \ref{EO_prime_power} explaining, for a given prime number $p$, which are the moduli of the form $m=p^n$ that \PCM\ can treat correctly. 

\begin{theorem}
Assume the validity of Experimental Observation  \ref{EO_prime_power}. Let $B\geq 2$ and let $p$ be a prime. Let $i_0\geq 0$ be the unique integer such that $p^{i_0}< B+2\leq p^{i_0+1}$. Then:
\begin{enumerate}
	\item \PCM\ solves the task of identifying the congruence classes modulo $p^i$ if $i\leq i_0$.
		\item \PCM\ solves the task of identifying the congruence classes modulo $p^{i_0+1}$ if and only if
	\begin{equation}\label{disc_cond_p}
		p^{i_0+1}-p^{i_0}-1\leq B.
	\end{equation}
	\item \PCM\ does not solve the task of identifying the congruence classes modulo $p^i$ if $i\geq i_0+2$.
\end{enumerate}
\end{theorem}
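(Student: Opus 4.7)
The plan is to derive all three statements directly from Experimental Observation~\ref{EO_prime_power}, treating the three ranges of $i$ in order, since the assertion for $i=i_0+1$ feeds into the induction needed for $i\geq i_0+2$.

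\textbf{Statement 1.} For $i\leq i_0$ we have $p^i\leq p^{i_0}<B+2$, so we are in case (1) of Experimental Observation~\ref{EO_prime_power} and PCM identifies the congruence classes modulo $p^i$. This already pins down the largest solvable exponent $\leq i_0$ as being $i_0$ itself, a fact that will be used repeatedly below.

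\textbf{Statement 2.} Split according to whether $p^{i_0+1}\leq B+2$. If this holds, case (1) of Experimental Observation~\ref{EO_prime_power} gives solvability, and the inequality $p^{i_0+1}-p^{i_0}-1\leq (B+2)-1-1=B$ shows that \eqref{disc_cond_p} is automatically satisfied. If instead $p^{i_0+1}>B+2$, case (2) applies; by Statement 1 the largest $j\in\{0,\dots,i_0\}$ for which PCM identifies the classes modulo $p^j$ is $j=i_0$, so the solvability criterion $p^{i_0+1}-B-1\leq p^j$ becomes exactly \eqref{disc_cond_p}. Combining both sub-cases yields the claimed equivalence.

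\textbf{Statement 3.} Argue by induction on $i\geq i_0+2$ that PCM fails to identify the classes modulo $p^i$. In any such step $p^i\geq p^{i_0+2}> p^{i_0+1}\geq B+2$, so case (2) of Experimental Observation~\ref{EO_prime_power} applies, and we must show that the corresponding $j^\star$ (the largest exponent $<i$ for which PCM succeeds) satisfies $p^{j^\star}< p^i-B-1$. By Statement 1 and the inductive hypothesis, no exponent in $\{i_0+2,\dots,i-1\}$ is solvable, so $j^\star\leq i_0+1$ and hence $p^{j^\star}\leq p^{i_0+1}$. It therefore suffices to prove $p^i - B - 1 > p^{i_0+1}$. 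Since $p\geq 2$,
\[
p^i - p^{i_0+1} \;\geq\; p^{i_0+2} - p^{i_0+1} \;=\; p^{i_0+1}(p-1)\;\geq\; p^{i_0+1} \;\geq\; B+2 \;>\; B+1,
\]
which is exactly the required inequality. This closes the induction and proves Statement 3.

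The argument is almost entirely bookkeeping; the only place requiring genuine care is the induction for Statement 3, where one must confirm that failure at all intermediate exponents $i_0+2,\dots,i-1$ forces the relevant $j^\star$ down to at most $i_0+1$, so that the single estimate $p^i\geq 2p^{i_0+1}\geq 2(B+2)$ drives the contradiction uniformly. Once that observation is in place the inequality chain above is the entire substance of the proof.
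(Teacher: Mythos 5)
Your proof is correct, and for statements 1 and 2 it coincides with the paper's argument essentially line by line (the paper phrases your case $p^{i_0+1}\leq B+2$ as the equality $p^{i_0+1}=B+2$, which is the same thing given the definition of $i_0$). The only real divergence is in statement 3: you run an induction on $i$ to force the relevant exponent $j^\star$ down to at most $i_0+1$ and then verify $p^i-B-1>p^{i_0+1}$, whereas the paper needs no induction at all. It takes the crude bound $j\leq i-1$, available immediately from the phrasing of Experimental Observation~\ref{EO_prime_power}(2), combines it with $B+1\leq p^{i_0+1}-1$ and $p^{i_0+1}\leq p^{i-1}$ to get $p^i\leq 2p^{i-1}-1<2p^{i-1}$, i.e.\ $p<2$, a contradiction. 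Your sharper localization of $j^\star$ is valid but buys nothing that the trivial bound $p^{j}\leq p^{i-1}$ does not already give, at the cost of an extra inductive layer; both versions ultimately rest on the same observation that $p\geq 2$ makes $p^i$ outgrow $p^{j}+B+1$.
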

\begin{proof}
1. If $i\leq i_0$, then \PCM\ solves the task of identifying the congruence classes modulo $p^i$  by Experimental Observation  \ref{EO_prime_power}(1).

2. The largest integer $j\in\{0,1,\dots, i_0\}$ such that \PCM\ solves the task of identifying the congruence classes modulo $p^j$ is $i_0$. If $p^{i_0+1} >B+2$ then by Experimental Observation  \ref{EO_prime_power}(2)  \PCM\ solves the task of identifying the congruence classes modulo $p^{i_0+1}$ if and only if $p^{i_0+1}-B-1\leq p^{i_0}$, that is if and only if \eqref{disc_cond_p} holds. If $p^{i_0+1}=B+2$ then by  Experimental Observation  \ref{EO_prime_power}(1)  \PCM\ solves the task of identifying the congruence classes modulo $p^{i_0+1}$ and \eqref{disc_cond_p} is verified in this case because
$$
p^{i_0+1}-p^{i_0}-1=B-(p^{i_0}-1)\leq B.
$$

3. Let $i\geq i_0+2$, and let $j\in\{0,1,\dots,i-1\}$ be the largest integer such that the model can correctly identify the congruence classes modulo $p^j$. By Experimental Observation  \ref{EO_prime_power}(2), it suffices to show that the inequality $p^i-B-1\leq p^j$ cannot hold. If, by contradiction, we had $p^i-B-1\leq p^j$, then
\begin{equation*}\begin{split}
p^i&\leq p^j+B+1\\
  &\leq  p^{i-1}+p^{i_0+1}-1\\
  &\leq 2\cdot p^{i-1}-1\\
  &<2\cdot p^{i-1}
\end{split}\end{equation*}
because $j\leq i-1$, $B+1\leq p^{i_0+1}-1$ and $i_0+1\leq i-1$. Hence
$$
p^{i-1}(p-2)< 0,
$$
and this is impossible since $p\geq 2$.
\end{proof}

\begin{corollary}
If $p=2$, then \PCM\ solves the task of identifying the congruence classes modulo $2^i$ if and only if $i\leq i_0+1$.
\end{corollary}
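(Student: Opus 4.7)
The plan is to derive the corollary as a direct consequence of the Theorem, by checking that when $p=2$ the borderline case $i = i_0 + 1$ is always handled correctly. Parts (1) and (3) of the Theorem immediately give one direction ($i \leq i_0$ is always solvable; $i \geq i_0 + 2$ is never solvable), so the entire content of the corollary reduces to verifying that the condition in part (2), namely $p^{i_0+1} - p^{i_0} - 1 \leq B$, is automatic when $p=2$.

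First I would recall the defining inequality for $i_0$: by construction $2^{i_0} < B + 2$, hence $2^{i_0} \leq B + 1$, which is equivalent to $2^{i_0} - 1 \leq B$. Next I would rewrite the left-hand side of the condition in the Theorem for $p=2$ as
\begin{equation*}
2^{i_0+1} - 2^{i_0} - 1 = 2 \cdot 2^{i_0} - 2^{i_0} - 1 = 2^{i_0} - 1,
\end{equation*}
so the condition $2^{i_0+1} - 2^{i_0} - 1 \leq B$ coincides exactly with $2^{i_0} - 1 \leq B$, which we have just established. Thus part (2) of the Theorem applies in the positive sense, showing that \PCM\ solves the task for $i = i_0 + 1$.

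Combining the three cases, \PCM\ solves the task of identifying the congruence classes modulo $2^i$ if and only if $i \leq i_0 + 1$, which is precisely the statement of the corollary. There is no genuine obstacle here; the only subtlety worth emphasising is the algebraic identity $2^{i_0+1} - 2^{i_0} = 2^{i_0}$, which is peculiar to $p=2$ and is what makes the boundary case at $i_0 + 1$ always favourable in this case (by contrast, for odd primes the gap $p^{i_0+1} - p^{i_0}$ grows faster than $B$ is guaranteed to, so the analogue of the corollary fails).
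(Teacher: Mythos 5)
Your proof is correct and follows essentially the same route as the paper: both reduce the corollary to checking that condition \eqref{disc_cond_p} is automatic for $p=2$, using $2^{i_0}<B+2$ to get $2^{i_0}-1\leq B$ and the identity $2^{i_0+1}-2^{i_0}-1=2^{i_0}-1$. Your additional explicit appeal to parts (1) and (3) of the Theorem is implicit in the paper's version but changes nothing of substance.
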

\begin{proof}
We need only observe that condition \eqref{disc_cond_p} is always verified if $p=2$. Indeed, $B+2>2^{i_0}$, therefore $B+1\geq 2^{i_0}$ and
$$
2^{i_0+1}-2^{i_0}-1 = 2^{i_0}-1\leq B,
$$
as claimed.
\end{proof}

\section{Conclusions and Future Work}\label{Conclusion}
The main goal of this paper is to exploit the Scientific Method to explain the behaviors of Neural Networks by providing a mathematical model based on empirical evidence that highlights the relationships between their inputs and outputs in a rigorous algebraic way.

The empirical observations allow us to draw the theoretical consequences for describing and explaining the behaviors of \PCM. Thus, a mathematical formulation was derived that models patterns of interest to explain when and why \PCM\ succeeds in performing the task and, if not, what error pattern it follows.

This theoretical approach was successful for \PCM, a neural model built within a controlled environment where the features of the data are arithmetical and mathematically related to each other. Indeed, the most striking conclusion that we draw from the analysis carried out in the previous sections is the possibility of formulating rules that describe in mathematical terms the behaviors of \PCM, making the outcomes of the experiments fully explainable and predictable. Our approach therefore suggests that \textquotedblleft explainability\textquotedblright\ should be interpreted not only as an a posteriori analysis aimed at understanding which features of the dataset are responsible for a particular outcome of the network, but more importantly as the development of a mathematical model that relates all the features of the data to the possible outcomes, in order to allow a prediction of the choices that the network will make and an {\em a priori} explanation of its behaviors.

The proposed methodology and its experimental validation have proven promising for defining the first theoretical approach to explain convolutional networks. It would be interesting to extend the study using other datasets and other architectures, as we believe that this will shed new light on the inner mechanisms governing Neural Networks, constituting the basis for a new Theory of Explainability.

\section*{Acknowledgments}
The authors heartily thank Massimo Bertini of Mathema srl for his support during the experiments and his precious help in reviewing and testing the code.





\appendix
\section{Hyperparameters Tuning}
This Appendix is devoted to tuning the various hyperparameters of \PCM\ (cf. Subsection \ref{Hyperparameters}). All our experiments were conducted using a dedicated machine with hardware specifications Intel Core i9 processor ($20\times 3.7$ GHz) and 16 GB RAM.

\subsection{Convolutional Hyperparameters}\label{App_Conv_Hyp}
In most applications several convolutional layers are employed, each with a rather large number of channels (typically $2^6$-$2^8$) that are supposed to capture different features of the dataset (cf. \citep{ajit2020review}, \citep{almhaithawi2023construction}, \citep{chen2021review}, \citep{li2021survey}, \citep{peng2020semicdnet}, \citep{soffer2019convolutional}). This philosophy suggests to first choose $\lambda=2$ and compare the model's performances with values of $C_1$ and $C_2$ that range from 1 to 128.

To this end we employ seven different versions of \PCM\ to classify the congruence classes modulo $m\in\{2,3,\dots,10\}$. These versions, realized with $\lambda=2$ as indicated in Subsection \ref{Hyperparameters}, are trained for $t=10$ epochs using $r=400$ batches of size $s=256$ each and are characterized by different values of the hyperparameters $C_1$ and $C_2$. The accuracies achieved during the validation process are listed in Table~\ref{optimizing_channels}.

\begin{table*}
\begin{center}
\caption{Accuracies attained by different versions of \PCM\ in identifying the congruence classes modulo $m\in\{2,3,\dots,10\}$ with $B=8$; each version of the model uses two convolutional layers and is characterized by the indicated values of $C_1$ and $C_2$.}
\label{optimizing_channels}
\begin{tabular}{|c|c|ccccccccc|} 
\hhline{~~---------}
\multicolumn{2}{c|}{} & \multicolumn{9}{c|}{\cellcolor{gray!30}$\boldsymbol{m}$}\\
\hline
\cellcolor{gray!30} $\boldsymbol{C_1}$ & \cellcolor{gray!30}$\boldsymbol{C_2}$ & \cellcolor{gray!30} \bf 2 & \cellcolor{gray!30} \bf 3 & \cellcolor{gray!30} \bf 4 & \cellcolor{gray!30} \bf 5 & \cellcolor{gray!30} \bf 6 &\cellcolor{gray!30} \bf 7 & \cellcolor{gray!30} \bf 8 & \cellcolor{gray!30} \bf 9 & \cellcolor{gray!30} \bf 10 \\  
\hline
64 & 128 & 1.00 & 1.00 & 1.00 &  1.00 & 1.00 & 1.00 & 1.00 & 1.00 & 1.00 \\ 
\hline
32 & 64  & 1.00 & 1.00 & 1.00 & 1.00 & 1.00 & 1.00 & 1.00 & 1.00 & 1.00 \\ 
\hline
16 & 32  & 1.00 & 1.00 & 1.00 & 1.00 & 1.00 & 1.00 & 1.00 & 1.00 & 1.00 \\ 
\hline
8 & 16  & 1.00 & 1.00 & 1.00 & 1.00 & 1.00 & 1.00 & 1.00 & 1.00 & 1.00 \\ 
\hline
4 & 8  & 1.00 & 1.00 & 1.00 & 1.00 & 1.00 & 1.00 & 1.00 & 1.00 & 1.00 \\
\hline
2 & 4  & 1.00 & 1.00 & 1.00 & 1.00 & 1.00 & 1.00 & 1.00 & 1.00 & 1.00 \\ 
\hline
1 & 2  & 1.00 & 1.00 & 1.00 & 0.90 & 1.00 & 1.00 & 1.00 & 1.00 & 0.90 \\ 
\hline
\end{tabular}
\end{center}
\end{table*}

Next, we analyze the necessity of using two convolutional layers. The strategy that we adopt is similar to the previous one: we construct two versions of \PCM\ as outlined in Subsection \ref{Hyperparameters} with $\lambda=1$. The number $C_1=C$ of channels within this layer is set to $C=4$ for the first version and to $C=2$ for the second. The accuracies obtained after validating are listed in Table~\ref{optimizing_conv_layers}.

\begin{table*}
\begin{center}
\caption{Accuracies attained by different versions of \PCM\ in identifying the congruence classes modulo $m\in\{2,3,\dots,10\}$ with $B=8$; each version of the model uses one convolutional layer and is characterized by the indicated values of the hyperparameter $C_1=C$.}
\label{optimizing_conv_layers}
\begin{tabular}{|c|ccccccccc|} 
\hhline{~---------}
\multicolumn{1}{c|}{} & \multicolumn{9}{c|}{\cellcolor{gray!30}$\boldsymbol{m}$}\\
\hline
\cellcolor{gray!30} $\boldsymbol{C}$ & \cellcolor{gray!30} \bf 2 & \cellcolor{gray!30} \bf 3 & \cellcolor{gray!30} \bf 4 & \cellcolor{gray!30} \bf 5 & \cellcolor{gray!30} \bf 6 &\cellcolor{gray!30} \bf 7 & \cellcolor{gray!30} \bf 8 & \cellcolor{gray!30} \bf 9 & \cellcolor{gray!30} \bf 10 \\  
\hline
4  & 1.00 & 1.00 & 1.00 & 1.00 & 1.00 & 1.00 & 1.00 & 1.00 & 1.00 \\ 
\hline
2  & 1.00 & 1.00 & 1.00 & 1.00 & 1.00 & 0.99 & 1.00 & 1.00 & 0.99 \\ 
\hline
\end{tabular}
\end{center}
\end{table*}

The conclusion that we draw from these experiments is that the number of channels and convolutional layers typically employed in the literature is excessive in our case. The reason that can be adduced to justify this phenomenon lies in the intrinsic nature of our dataset. Namely, in many standard applications of convolutional models, the data possess several features, most of which vary independently and may intertwine in unexpected ways. In order to extract these features and elaborate the input, the network needs to process the data through multiple parallel channels: different channels capture different features, and consequently the more channels are employed in the architecture, the more features can be captured. When working with natural numbers, however, the situation changes substantially: the features are no longer subjective, and the relationships between them cannot behave too wildly. In fact, everything is subject to strict arithmetic rules, and the network's understanding of these rules is all that matters to accomplish the proposed task.
Moreover, the experiments presented in Section \ref{Experimental_Results} show that the network's comprehension of the arithmetic structure of natural numbers is more likely due to the peculiar convolutional characteristics of the architecture (i.e. the fact that numbers are processed in sequences acted on by a kernel) rather than to the presence of a large number of channels operating in parallel.

In light of the previous discussion and experiments, we choose in our model
$$
\boxed{\lambda=1} \qquad\text{and}\qquad \boxed{C=4}
$$

\subsection{Training Hyperparameters}\label{App_Training_Hyp}
In order to optimize the number of epochs for the training process, we apply the model constructed so far to classify the congruence classes modulo an integer $m\in\{11,13,17,19\}$ using a sequence of length $B=8$ and a set of $r=400$ batches of size $s=256$ each. We then measure the time it takes the model to be trained and determine the accuracy achieved by validating after $t=10$ and $t=50$ epochs. The results are summarized in Table \ref{B8optimizing_epochs}. 

Observe that the employed values of $m$ are all prime numbers. As can be seen from the experiments presented in Section \ref{Experimental_Results}, prime moduli are the most challenging for the model, as in these cases there exists a bunch of classes that cannot be distinguished from each other. Therefore, we expect these moduli to be the most demanding in terms of number of epochs for the training.

\begin{table}
\begin{center}
\caption{Accuracies attained by \PCM\ in identifying the congruence classes modulo $m\in\{11,13,17,19\}$ with $B=8$ after $t=10$ and $t=50$ epochs.}
\label{B8optimizing_epochs}
\begin{tabular}{|c|c|cccc|} 
\hhline{~~----}
\multicolumn{2}{c|}{} & \multicolumn{4}{c|}{\cellcolor{gray!30}$\boldsymbol{m}$}\\
\hline
\cellcolor{gray!30} $\boldsymbol{t}$ & \cellcolor{gray!30} \bf Training time (min) & \cellcolor{gray!30}\bf 11 & \cellcolor{gray!30}\bf 13 & \cellcolor{gray!30}\bf 17 & \cellcolor{gray!30} \bf 19 \\  
\hline
 10 &  25 & 0.92 &  0.76 &  0.59 &  0.54 \\ 
\hline
50 & 128 & 0.92 & 0.77 &  0.61 &  0.56 \\ 
\hline
\end{tabular}
\end{center}
\end{table}

From Table \ref{B8optimizing_epochs} we can infer that the accuracy of the results does not improve significantly when passing from epoch 10 to epoch 50 (this observation could also be made more precise by comparing the confusion matrices, which indeed show the same pattern described in Subsection \ref{PatternA} in all cases, both at epoch 10 and at epoch 50, without relevant differences); however, the building time increases linearly with the number of epochs, ranging from a minimum of 25 minutes for a 10-epoch training to a maximum of more than two hours for a 50-epoch training.
In light of these observations, we choose to train our model for a total of
$$
\boxed{t=10}
$$
epochs.

The choice of the optimal values for $r$ (the number of batches) and $s$ (the size of each batch) proceeds analogously: we train the model constructed so far to classify the congruence classes modulo $m\in\{13,19\}$ using a sequence of length $B=8$ and various values for $r$ and $s$; the accuracy achieved by the model, as well as the training time, the total size of the training set and its percentage over the whole dataset, are listed in Table \ref{B8optimizing_batches}.

A quick look at Table \ref{B8optimizing_batches} shows that a training set size of 6~400 elements is too small to achieve acceptable results in terms of accuracy, while a training set size of 204~800 elements is excessive in terms of time, since smaller training sets achieve the same accuracy faster. A training set of 25~600 elements appears optimal for low values of $m$ since it achieves good accuracies and, more substantially, since the corresponding confusion matrices already reveal the pattern of Subsection \ref{PatternA}; however, for large values of $m$, this dataset loses significance because it does not contain enough representatives for each congruence class. Among the intermediate choices with a dataset size of 102~400 elements, the various values of $r$ and $s$ do not modify the outputs relevantly; we therefore choose for our \PCM
$$
\boxed{r=400}\qquad\text{and} \qquad \boxed{s=256}
$$

\subsection{Locality Hyperparameters}\label{App_Loc_Hyp}

In this Subsection we select the values through which we make the locality hyperparameters range in the experiments of Section \ref{Experimental_Results}.

We have observed several times that the hyperparameter $B$ controls the network's learning of the additive structure of $\mathbb{N}$. In particular, this means that its values must be assigned in relation to the task we are addressing: for example, it is expected (and the experiments in Section \ref{Experimental_Results} have shown) that the network does not necessarily need to identify all congruence classes modulo $m$ when the value of $B$ is small compared to $m$. Furthermore, since we aim to analyze and explain all the various behaviors that the model can exhibit, the values of $B$ and $m$ must be carefully chosen to exemplify them all exhaustively.

The only constraint we impose on ourselves in choosing the highest values of $B$ and $m$ concerns the time required to train the model (in fact, we do not expect the value of $m$ to affect the training time). To accomplish this goal, we set $m=20$, $m=30$ and look for the highest value of $B$ that allows a training time not exceeding one hour. The results that we obtain are listed in Table~\ref{choiceB}, where we also vary the size $k$ of the kernel.

\begin{table*}
	\begin{center}
		\caption{Accuracies attained by \PCM\ in identifying the congruence classes\\ modulo $m\in\{13,19\}$ with $B=8$ using $r$ batches of size $s$.}
		\label{B8optimizing_batches}
		\begin{tabular}{|cc|c|c|c|cc|} 
			\hhline{~~~~~--}
			\multicolumn{5}{c|}{} & \multicolumn{2}{c|}{\cellcolor{gray!30}$\boldsymbol{m}$} \\
			\hline 
			\cellcolor{gray!30} $\boldsymbol{r}$ & \cellcolor{gray!30} $\boldsymbol{s}$  & \cellcolor{gray!30} \bf Training set size & \cellcolor{gray!30} \bf Percentage & \cellcolor{gray!30} \bf Training time (min) & \cellcolor{gray!30} \bf 13 & \cellcolor{gray!30} \bf 19 \\  
			\hline
			100 & 64  & 6\,400 & 0.82\% & 2 & 0.63 &  0.36 \\ 
			\hline
			200 & 128 & 25\,600 & 3.3\% & 6 & 0.73 & 0.51 \\ 
			\hline\hline
			200 & 512 & 102\,400 & 13\% & 26 & 0.77 & 0.54 \\
			\hline
			\bf 400 & \bf 256 & \bf 102\,400 & \bf 13\% & \bf 25 & \bf 0.76 & \bf 0.54 \\ 
			\hline
			800 & 128 & 102\,400 & 13\% & 25 & 0.76 & 0.53 \\ 
			\hline\hline
			400 & 512 & 204\,800 & 26\% & 50 &  0.79 & 0.46 \\  
			\hline
		\end{tabular}
	\end{center}
\end{table*}

\begin{table}
\begin{center}
\caption{Comparison of the 10-epoch training time of \PCM\\for $m\in\{20,30\}$ and different values of $B$ and $k$.}
\label{choiceB}
\begin{tabular}{|cc|cc|} 
\hhline{~~--}
\multicolumn{2}{c|}{} & \multicolumn{2}{c|}{\cellcolor{gray!30} \bf Training time (min) }\\ 
\hline
\cellcolor{gray!30} $\boldsymbol{B}$ & \cellcolor{gray!30} $\boldsymbol{k}$ & \multicolumn{1}{c}{\cellcolor{gray!30} $\boldsymbol{m=20}$}  &
\multicolumn{1}{c|}{\cellcolor{gray!30} $\boldsymbol{m=30}$} \\  
\hline
8 &  7 & 25 & 25 \\ 
\hline
16 & 7 & 43 & 43\\ 
\hline
16 &  15 & 50 & 49 \\ 
\hline
24 & 7 & 62 & 62 \\ 
\hline
24 &  15 & 69 & 70 \\ 
\hline
\end{tabular}
\end{center}
\end{table}

\begin{table*}
\begin{center}
\caption{Accuracies attained by \PCM\ in identifying the congruence classes modulo $m=7$, $17$, $23$ with $B=8$, $16$, $24$ respectively, using kernels of various sizes. }
\label{optimizing_kernel}
\begin{tabular}{cc|ccccccc|} 
\hhline{~~-------}
\multicolumn{2}{c|}{} & \multicolumn{7}{c|}{\cellcolor{gray!30}$\boldsymbol{k}$}  \\
\hline
\multicolumn{1}{|c}{\cellcolor{gray!30}$\boldsymbol{m}$} & $\cellcolor{gray!30}\boldsymbol{B}$ & \cellcolor{gray!30} \bf 3 & \cellcolor{gray!30} \bf 5 & \cellcolor{gray!30} \bf 7 & \cellcolor{gray!30} \bf 9 & \cellcolor{gray!30} \bf 11 & \cellcolor{gray!30} \bf 13 & \cellcolor{gray!30} \bf 15 \\
\hline
\multicolumn{1}{|c}{7} & 8 & 0.99 & 1.00 & 1.00 & $-$ & $-$ & $-$ & $-$ \\
\hline
\multicolumn{1}{|c}{17} & 16 & 0.07 & 0.85 & 1.00 &  1.00 & 1.00 & 1.00 & 1.00 \\
\hline
\multicolumn{1}{|c}{23} & 24 & 0.42 & 0.87 & 0.92 & 0.97 & 0.96 & $\sim$ 1.00 & 1.00\\
\hline
\end{tabular}
\end{center}
\end{table*}

\begin{figure*}
    \centering
    \includegraphics[width=0.5\textwidth]{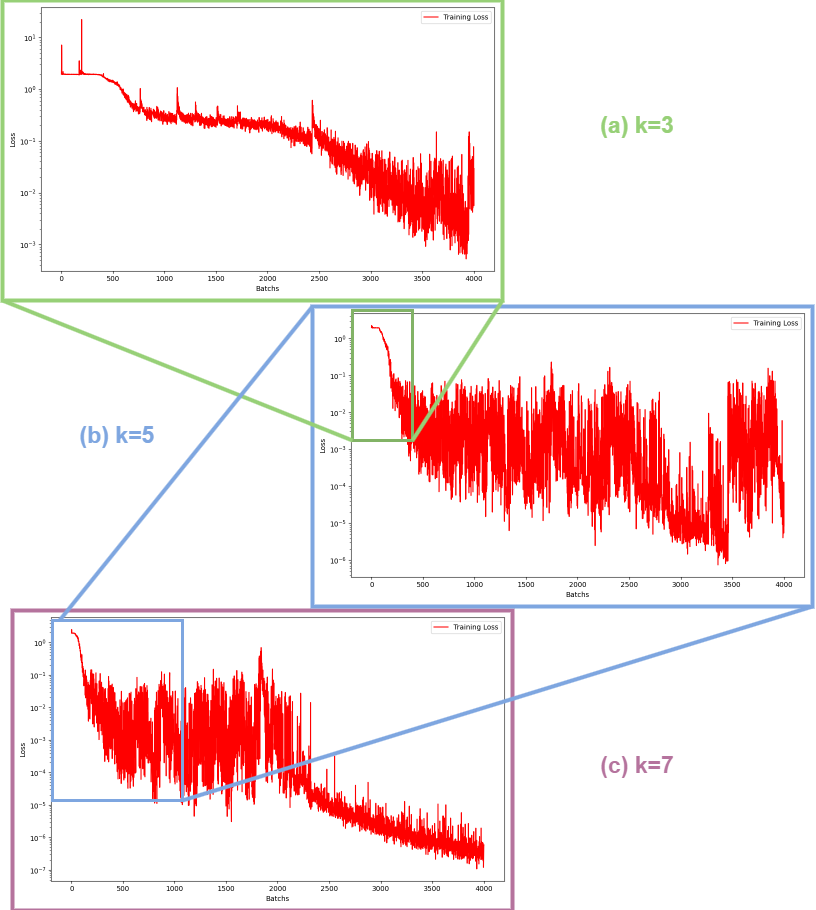}
    \caption{Graphs of \PCM's training losses when $m=7$, $B=8$ and {\rm(a)} $k=3$, {\rm(b)} $k=5$, {\rm(c)} $k=7$.}
    \label{loss_figures}
\end{figure*}

The results show that the training time does not depend on the value of $m$ and that it firstly exceeds one hour when $B=24$; thus, this value will be our upper bound for the hyperparameter $B$. As for the task, we take $m=30$ as the maximum modulo.

In order to perform our experiments, we will assign to $m$ all possible integer values from $2$ to $30$; however, we deem it redundant to apply a similar systematicity to the hyperparameter $B$ making it take all possible values from $2$ to $24$. Indeed, the various behaviors displayed by the model are exhaustively exemplified and explained by conducting our analysis with just three values for $B$, namely $B=8$, $16$ and $24$ (cf. Section \ref{Experimental_Results}).

Finally, we choose the values for the kernel size $k$. To this end, for each of the three chosen values of $B$, we test the model on the largest prime modulo $m$ whose classes can be correctly identified (i.e., $m=7$ when $B=8$, $m=17$ when $B=16$, and $m=23$ when $B=24$): we look for the kernel sizes that allow the model to achieve the perfect accuracy of 1.00. The results of this analysis are presented in Table \ref{optimizing_kernel}.

When $B=8$, we chose $k=7$, even though this is not the smallest size that achieves accuracy 1.00, since in this situation the training loss approaches zero faster than in the other two cases (cf. Figure \ref{loss_figures}).

When $B=16$ (resp. $B=24$), the smallest kernel that achieves accuracy 1.00 has size $k=7$ (resp. $k=15$), and therefore this will be our choice. However, for completeness, we shall also perform the experiments with $k=15$ (resp. $k=7$) to compare the various outputs of the model.

\bibliographystyle{elsarticle-num} 
\bibliography{References}





\end{document}